\crefname{equation}{}{}
\Crefname{equation}{}{}
\crefname{definition}{\textbf{definition}}{definitions}
\Crefname{definition}{Definition}{Definitions}
\crefname{assumption}{\textbf{assumption}}{assumptions}
\Crefname{assumption}{Assumption}{Assumptions}
\definecolor{maroon}{RGB}{192,80,77}
\newcommand{\RRR}{\mathcal{R}}
\newtheorem{theorem}{Theorem}
\theoremstyle{definition}
\newtheorem{definition}{Definition}[section]
\newcommand{\namecite}[1]{\citeauthor{#1}~[\citeyear{#1}]}
\title{Combating Reinforcement Learning's Sisyphean Curse \\ 
with Intrinsic Fear}
\begin{document}
\author{
  Zachary C. Lipton$^{1,2,3}$, Kamyar Azizzadenesheli$^4$, Abhishek Kumar$^3$, Lihong Li$^{5}$, \quad 
Jianfeng Gao$^6$, Li Deng$^7$\\
  Carnegie Mellon University$^1$,
  Amazon AI$^2$,
  University of California, San Diego$^3$,
Univerisity of California, Irvine$^4$,
Google$^5$,
Microsoft Research$^6$,
Citadel$^7$
\\
  \href{mailto:zlipton@cmu.edu}{\nolinkurl{zlipton@cmu.edu}},
   \href{mailto:kazizzad@uci.edu}{\nolinkurl{kazizzad@uci.edu}},
    \href{mailto:abkumar@ucsd.edu}{\nolinkurl{abkumar@ucsd.edu}}\\
    \{
    \href{mailto:lihongli.cs@gmail.com}{\nolinkurl{lihongli}},
   \href{mailto:jfgao@microsoft.com}{\nolinkurl{jfgao}},
    \href{mailto:l.deng@ieee.org}{\nolinkurl{deng}}
    \}
    @microsoft.com
}

\maketitle

\begin{abstract}
Many practical environments contain 
catastrophic states that an optimal agent would visit infrequently or never.
Even on toy problems, 
Deep Reinforcement Learning (DRL) agents 
tend to periodically revisit
these states upon forgetting their existence
under a new policy.
We introduce \emph{intrinsic fear} (IF), 
a learned reward shaping 
that guards DRL agents against periodic catastrophes.
IF agents possess a \emph{fear} model 
trained to predict the probability 
of imminent catastrophe. 
This score is then used to penalize the 
Q-learning objective.
Our theoretical analysis 
bounds the reduction in average return
due to learning on the perturbed objective. 
We also prove robustness to classification errors.
As a bonus, IF models tend to learn faster, 
owing to reward shaping.
Experiments demonstrate that \emph{intrinsic-fear} DQNs 
solve otherwise pathological environments 
and improve on several Atari games.

\end{abstract}

\section{Introduction}
\label{sec:introduction}
Following the success of deep reinforcement learning (DRL)
on Atari games \cite{mnih15human} 
and the board game of Go \cite{silver2016mastering}, 
researchers are increasingly exploring 
practical applications.
Some investigated applications include robotics \cite{levine2016end}, 
dialogue systems \cite{fatemi2016policy,lipton2016efficient},
energy management \cite{googleenergy2016}, 
and self-driving cars \cite{shalev2016safe}. 
Amid this push to apply DRL,
we might ask, 
\emph{can we trust these agents in the wild?}
Agents acting society may cause harm.
A self-driving car might hit pedestrians
and a domestic robot might injure a child.
Agents might also cause self-injury,
and while Atari lives lost are inconsequential, 
robots are expensive.

Unfortunately, it may not be feasible 
to prevent all catastrophes
without requiring extensive prior knowledge 
\cite{garcia2015comprehensive}.
Moreover, for typical DQNs, 
providing large negative rewards does not solve the problem: as soon as the catastrophic trajectories are flushed from the replay buffer, the updated Q-function ceases to discourage revisiting these states.


\textbf{In this paper}, we define
\emph{avoidable catastrophes}
as states that prior knowledge dictates 
an optimal policy should visit rarely or never.
Additionally, we define \emph{danger states}---those from which a catastrophic state 
can be reached in a small number of steps,
and assume that the optimal policy
does visit the danger states rarely or never.
The notion of a danger state
might seem odd absent any assumptions 
about the transition function.
With a fully-connected transition matrix,
all states are danger states. 
However, physical environments are not fully connected. 
A car cannot be parked this second, 
underwater one second later.

This work primarily addresses 
how we might prevent DRL agents
from perpetually making the same mistakes.
As a bonus, we show that the prior knowledge knowledge 
that \emph{catastrophic states} should be avoided
accelerates learning.
Our experiments show that even on simple toy problems, 
the classic deep Q-network (DQN) algorithm
fails badly,
repeatedly visiting catastrophic states
so long as they continue to learn. 
This poses a formidable obstacle to using DQNs in the real world. 
How can we trust a DRL-based agent
that was doomed to periodically 
experience catastrophes, 
just to remember that they exist?
Imagine a self-driving car 
that had to periodically hit a few pedestrians 
to remember that it is undesirable.

In the tabular setting, an RL agent never forgets the learned dynamics of its environment, 
even as its policy evolves.
Moreover, when the Markovian assumption holds, 
convergence to a globally optimal policy is guaranteed. 
However, the tabular approach becomes infeasible in high-dimensional, continuous state spaces.
The trouble for DQNs owes to the use of function approximation \cite{murata2005memory}.
When training a DQN,
we successively update a neural network based on experiences.
These experiences might be sampled in an online fashion, 
from a trailing window 
(\emph{experience replay buffer}),
or uniformly from all past experiences.
Regardless of which mode we use to train the network,
eventually, states that a learned policy never encounters 
will come to form an infinitesimally small region of the training distribution. 
At such times, our networks suffer the well-known problem of catastrophic forgetting \cite{mccloskey1989catastrophic,mcclelland1995there}.
Nothing prevents the DQN's policy from drifting back towards
one that revisits forgotten catastrophic mistakes. 

We illustrate the brittleness of modern DRL algorithms
with a simple pathological problem
called \emph{Adventure Seeker}. 
This problem consists of a one-dimensional continuous state, two actions,
simple dynamics, and admits an analytic solution.
Nevertheless, the DQN fails.
We then show that similar dynamics exist in the classic RL environment Cart-Pole. 

To combat these problems,
we propose the \emph{intrinsic fear} (IF) algorithm. 
In this approach,
we train a supervised \emph{fear model}
that predicts which states are likely to lead 
to a catastrophe within $k_r$  steps.
The output of the fear model (a probability),
scaled by a \emph{fear factor} 
penalizes the $Q$-learning target.
Crucially, the fear model maintains buffers 
of both \emph{safe} and \emph{danger} states. 
This model never forgets danger states,
which is possible due to the infrequency of catastrophes.

We validate the approach both empirically and theoretically.
Our experiments address \emph{Adventure Seeker}, Cartpole, and several Atari games.
In these environments, we label every lost \emph{life} 
as a catastrophe.
On the toy environments, 
IF agents learns to avoid catastrophe indefinitely.    
In Seaquest experiments, the IF agent achieves higher reward
and in Asteroids, the IF agent achieves both higher reward and fewer catastrophes.
The improvement on Freeway is most dramatic.

We also make the following theoretical contributions: 
First, we prove that when the reward is bounded 
and the optimal policy rarely visits the danger states, 
an optimal policy learned on the perturbed reward function 
has approximately the same return as the optimal policy learned on the original value function. 
Second, we prove that our method is robust to noise in the danger model.

\section{Intrinsic fear}
\label{sec:intrinsic-fear}
An agent interacts with its environment via 
a Markov decision process, or MDP, 
$(\mathcal{S}, \mathcal{A}, \mathcal{T}, \mathcal{R}, \gamma)$.
At each step $t$, the agent observes a state $s\in \mathcal{S}$
and then chooses an action $a\in \mathcal{A}$ according to its policy $\pi$.
The environment then transitions to state $s_{t+1} \in \mathcal{S}$ according to transition dynamics $\mathcal{T}(s_{t+1}|s_t,a_t)$ and generates a reward $r_t$ with expectation $\mathcal{R}(s,a)$.
This cycle continues until each episode terminates. 

An agent seeks to maximize 
the cumulative discounted return 
$\sum_{t=0}^T \gamma^t r_{t}$.
Temporal-differences methods \cite{sutton1988learning} 
like Q-learning \cite{watkins1992qlearning} 
model the Q-function, which gives the \emph{optimal} discounted total reward of a state-action pair.
Problems of practical interest tend to have large state spaces, 
thus the Q-function is typically approximated 
by parametric models such as neural networks.  
 
In Q-learning with function approximation, 
an agent collects experiences by acting greedily with respect to $Q(s,a;\theta_Q)$ and updates its parameters $\theta_Q$.
Updates proceed as follows.
For a given experience $(s_t,a_t,r_t,s_{t+1})$,
we minimize the squared Bellman error: 
\begin{equation}
\mathcal{L} = (Q(s_t,a_t; \theta_Q)-y_t)^2
\end{equation}
for $y_t = r_t + \gamma \cdot \max_{a'} Q(s_{t+1}, a'; \theta_Q)$.
Traditionally, the parameterised $Q(s,a;\theta)$ is trained by stochastic approximation, 
estimating the loss on each experience as it is encountered, yielding the update:
\begin{equation}
\begin{split}
\theta_{t+1} \gets & \theta_t + \alpha (
y_t - Q(s_t,a_t;\theta_t)) \nabla Q(s_t,a_t;\theta_t)\,.
\end{split}
\end{equation}
Q-learning methods also require an exploration strategy for action selection.  For simplicity, we consider only the $\epsilon$-greedy heuristic.
A few tricks 
help to stabilize Q-learning 
with function approximation.
Notably, with experience replay \cite{lin1992self}, 
the RL agent maintains a buffer of experiences, 
of experience to update the Q-function.

We propose a new formulation: 
Suppose there exists a subset $\mathcal{C} \subset \mathcal{S}$
of known \emph{catastrophe states}/
And assume that for a given environment, 
the optimal policy rarely enters 
from which catastrophe states are reachable 
in a short number of steps.
We define the distance $d(s_i, s_j)$
to be length $N$ of the smallest sequence of transitions
$\{(s_t, a_t, r_t, s_{t+1})\}_{t=1}^{N}$
that traverses state space from $s_i$ to $s_j$.\footnote{In the stochastic dynamics setting, the distance is the minimum mean passing time between the states.}
\begin{definition}
Suppose a priori knowledge 
that acting according to the optimal policy $\pi^*$,
an agent rarely encounters states $s\in S$ 
that lie within distance $d(s, c) < k_\tau$ 
for any catastrophe state $c \in \mathcal{C}$. 
Then each state $s$ for which $\exists c \in \mathcal{C} 
\text{ s.t. }  d(s,c) < k_\tau$ is a \emph{danger state}.
\end{definition}

\begin{algorithm}
\caption{Training DQN with Intrinsic Fear}\label{euclid}
\begin{algorithmic}[1]
\State \textbf{Input: } $Q$ (DQN), $F$ (fear model), fear factor $\lambda$, fear phase-in length $k_{\lambda}$, fear radius $k_r$
\State \textbf{Output: } Learned parameters $\theta_Q$ and $\theta_F$
\State Initialize parameters $\theta_Q$ and $\theta_F$ randomly
\State Initialize replay buffer $\mathcal{D}$, danger state buffer $\mathcal{D}_D$, and safe state buffer $\mathcal{D}_S$
\State Start per-episode turn counter $n_e$
\For{$t$ in 1:$T$}
	\State With probability $\epsilon$ select random action $a_t$
    \State Otherwise, select a greedy action $a_t = \arg\max_{a} Q(s_{t}, a; \theta_{Q})$
  	\State Execute action $a_t$ in environment, observing reward $r_t$ and successor state $s_{t+1}$
	\State Store transition $(s_t, a_t, r_t, s_{t+1})$ in  $\mathcal{D}$
    \If {$s_{t+1}$  is a catastrophe state}
 	\State Add states $s_{t-k_r}$ through $s_{t}$ to $\mathcal{D}_D$
    \Else
	     \State Add states $s_{t-n_e}$ through $s_{t-k_r-1}$ to $\mathcal{D}_S$ 
	\EndIf
  \State Sample a random mini-batch of transitions ($s_{\tau}$ , $a_{\tau}$, $r_{\tau}$, $s_{\tau+1}$) from $\mathcal{D}$
 \State $\lambda_\tau \gets \min(\lambda, \frac{\lambda \cdot t}{k_{\lambda}})$
 \State $y_{\tau} \gets \left\{\begin{array}{ll}
 \text{for terminal  } s_{\tau+1}:&\\
     \qquad  r_{\tau} - \lambda_\tau \\
     \text{for non-terminal } s_{\tau+1}:&\\
\qquad       r_\tau + \max_{a'} Q(s_{\tau+1}, a'; \theta_{Q}) - \\
\qquad\qquad\qquad\qquad\quad\lambda \cdot F(s_{\tau+1} ; \theta_{F}) \\
\end{array}\right\} $ 
\State $\theta_Q \gets \theta_Q - \eta \cdot \nabla_{\theta_Q} (y_\tau - Q(s_{\tau}, a_{\tau}; \theta_{Q}))^2$ 
\State Sample random mini-batch $s_j$ with $50\%$ of examples from $\mathcal{D}_D$ and $50\%$ from $\mathcal{D}_S$ 
\State $y_{j} \gets  \left\{\begin{array}{ll}
    	1, & \text{for } s_j \in \mathcal{D}_D \\
        0, & \text{for } s_j \in \mathcal{D}_S \\
       \end{array}\right\} $
\State $\theta_F \gets \theta_F - \eta \cdot \nabla_{\theta_F} \text{loss}_F(y_j, F(s_j; \theta_F)) $
\EndFor
\end{algorithmic}
\label{alg:intrinsic-fear}
\end{algorithm}


In 
Algorithm~\ref{alg:intrinsic-fear}, 
the agent maintains both a DQN 
and a separate, supervised \emph{fear model} 
$F:\mathcal{S}\mapsto[0,1]$. 
$F$ provides an auxiliary source of reward,
penalizing the Q-learner for entering likely danger states.
In our case, we use a neural network 
of the same architecture as the DQN 
(but for the output layer).
While one could sharing weights between the two networks, 
such tricks are not relevant to this paper's contribution.

We train the fear model to predict  
the probability that any state will lead 
to catastrophe within $k$ moves.
Over the course of training,
our agent adds each experience
$(s,a,r,s')$ to its experience replay buffer.
Whenever a catastrophe is reached
at, say, the $n_{th}$ turn of an episode,
we add the preceding $k_r$ (\emph{fear radius}) states 
to a \emph{danger buffer}.
We add the first $n-k_r$ states of that episode
to a \emph{safe buffer}.
When $n<k_r$, all states for that episode 
are added to the list of danger states.
Then after each turn, 
in addition to updating the Q-network,
we update the fear model,
sampling $50\%$ of states from the \emph{danger buffer}, assigning them label $1$,
and the remaining $50\%$ from the \emph{safe buffer}, 
assigning them label $0$. 

For each update to the DQN,
we perturb the TD target $y_t$.
Instead of updating $Q(s_t,a_t;\theta_{Q})$ towards $r_t + \max_{a'} Q(s_{t+1}, a'; \theta_{Q})$, we modify the target by subtracting the \emph{intrinsic fear}:
\begin{equation}
y_t^{IF} = r_t + \max_{a'} Q(s_{t+1}, a'; \theta_{Q}) - \lambda \cdot F(s_{t+1} ; \theta_{F})
\end{equation}
where $F(s; \theta_{F})$ is the fear model and $\lambda$ is a \emph{fear factor} determining the scale of the impact of intrinsic fear on the Q-function update.




\section{Analysis}

Note that IF perturbs the objective function. 
Thus, one might be concerned 
that the perturbed reward 
might lead to a sub-optimal policy. 
Fortunately, as we will show formally, if the labeled catastrophe states 
and danger zone do not violate our assumptions,
and if the fear model reaches arbitrarily high accuracy, 
then this will not happen. 

For an MDP, $M=\langle\mathcal{S}, \mathcal{A}, \mathcal{T}, \mathcal{R}, \gamma\rangle$, with $0\leq\gamma\leq 1$, the average reward return is as follows:
%
%

\begin{align*}
\eta_M(\pi) :=
\begin{cases}
\lim_{T\rightarrow \infty}\frac{1}{T}\mathbb{E}_M\Big[\sum_t^T r_t|\pi\Big]~~~&\textit{if}~~~\gamma = 1 \\[7pt]
(1-\gamma)\mathbb{E}_M\Big[\sum_t ^{\infty}\gamma^tr_t|\pi\Big] &\textit{if}~~~ 0\leq\gamma<1
\end{cases}
\end{align*}

The optimal policy $\pi^*$ of the model $M$ is the policy which maximizes 
the average reward return, 
$\pi^* = \max_{\pi\in\mathcal{P}} \eta(\pi)$ 
where $\mathcal{P}$ is a set of stationary polices.
\begin{theorem} \label{thm:ideal}
For a given MDP, $M$, 
with $\gamma\in[0,1]$ 
and a catastrophe detector $f$, 
let $\pi^*$ denote \emph{any} optimal policy of $M$, and $\tilde{\pi}$ denote 
an optimal policy of $M$ 
equipped with fear model $F$, and $\lambda$, environment $(M,F)$. 
If the probability $\pi^*$ 
visits the states in the danger zone 
is at most $\epsilon$, and $0\leq\RRR(s,a)\leq 1$, then
\begin{align}
\eta_M(\pi^*)\geq\eta_{M}(\tilde{\pi}) \geq \eta_{M,F}(\tilde{\pi})\geq \eta_M(\pi^*)-\lambda\epsilon\,.
\end{align}
In other words, $\tilde{\pi}$ is $\lambda\epsilon$-optimal in the original MDP.
\end{theorem}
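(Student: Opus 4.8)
The plan is to split the displayed chain of four inequalities and establish each link separately, using only three ingredients: the definition of $\eta$, the nonnegativity of the fear signal together with the fact that (in the idealized setting of this theorem) $F$ is the exact indicator of the danger zone, and the optimality of $\pi^*$ in $M$ and of $\tilde{\pi}$ in $(M,F)$. The key algebraic observation is that intrinsic fear is an additive reward perturbation with a sign: for every stationary policy $\pi$,
\begin{equation*}
\eta_{M,F}(\pi) = \eta_M(\pi) - \lambda\,\Phi(\pi), \qquad \Phi(\pi) \ge 0,
\end{equation*}
where $\Phi(\pi)$ denotes the long-run average (if $\gamma=1$) or the normalized discounted sum $(1-\gamma)\mathbb{E}[\sum_t \gamma^t F(s_{t+1})\mid\pi]$ (if $\gamma<1$) of the fear outputs along trajectories of $\pi$. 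This follows by applying linearity of expectation term-by-term to the perturbed reward $r_t - \lambda F(s_{t+1})$ inside the definition of $\eta_M$, and $\Phi(\pi)\ge 0$ because $F$ takes values in $[0,1]$. I would record this as a one-line lemma before the main argument, and also note that $(M,F)$ is again an MDP with rewards in $[-\lambda,1]$, so a stationary optimal policy $\tilde{\pi}$ exists and $\eta_{M,F}$ is well defined.

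Given the identity, the four inequalities follow in order. First, $\eta_M(\pi^*) \ge \eta_M(\tilde{\pi})$ because $\pi^*$ maximizes $\eta_M$ over stationary policies and $\tilde{\pi}$ is stationary. Second, $\eta_M(\tilde{\pi}) \ge \eta_{M,F}(\tilde{\pi})$ is just $\lambda\Phi(\tilde{\pi}) \ge 0$. Third, $\eta_{M,F}(\tilde{\pi}) \ge \eta_{M,F}(\pi^*)$ because $\tilde{\pi}$ is optimal for the shaped environment $(M,F)$ and $\pi^*$ is a feasible stationary policy there. Finally, I would bound $\eta_{M,F}(\pi^*) = \eta_M(\pi^*) - \lambda\Phi(\pi^*) \ge \eta_M(\pi^*) - \lambda\epsilon$, using that $F$ vanishes on safe states and is at most $1$ on danger states, so $\Phi(\pi^*)$ is at most the visitation mass $\pi^*$ places on the danger zone, which is $\le \epsilon$ by hypothesis. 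Chaining the four links yields $\eta_M(\pi^*) \ge \eta_M(\tilde{\pi}) \ge \eta_{M,F}(\tilde{\pi}) \ge \eta_M(\pi^*) - \lambda\epsilon$, and reading off the first and third terms gives $\eta_M(\tilde{\pi}) \ge \eta_M(\pi^*) - \lambda\epsilon$, i.e., the claimed $\lambda\epsilon$-optimality of $\tilde{\pi}$ in $M$.

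The optimality inequalities and the shaping identity are routine; the one point that needs care is making the hypothesis ``the probability $\pi^*$ visits the danger zone is at most $\epsilon$'' precise and uniform across the $\gamma=1$ and $0\le\gamma<1$ branches of $\eta_M$, so that $\Phi(\pi^*)\le\epsilon$ holds with the correct normalization. Concretely, I would interpret it as the stationary (respectively, $(1-\gamma)$-normalized discounted occupancy) probability that a $\pi^*$-trajectory started from the MDP's initial distribution lies in the danger zone; the normalizing factors $1/T$ and $(1-\gamma)\sum_t\gamma^t = 1$ are exactly what keep $\Phi(\pi^*)$ a convex combination of values of $F$ and hence bounded by $\epsilon$ rather than by something larger. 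I would also flag that this theorem assumes a perfect catastrophe detector, so $F$ equals the true danger-zone indicator; tolerance to classification error is the subject of the subsequent robustness theorem.
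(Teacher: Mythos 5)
Your proposal is correct and follows essentially the same route as the paper's proof: both rest on the additive decomposition $\eta_{M,F}(\pi)=\eta_M(\pi)-\lambda\Phi(\pi)$ with $\Phi\ge 0$, the optimality of $\tilde{\pi}$ in $(M,F)$ compared against the feasible policy $\pi^*$, and the bound $\Phi(\pi^*)\le\epsilon$ from the visitation hypothesis. Your version is simply a more carefully normalized and explicitly chained writing of the same argument.
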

\begin{proof}
The policy $\pi^*$ visits the fear zone with probability at most $\epsilon$. Therefore, applying $\pi^*$ on the environment with intrinsic fear $(M,F)$, provides a expected return of at least $\eta_M(\pi^*)-\epsilon\lambda$. Since there exists a policy with this expected return on $(M,F)$, therefore, the optimal policy of $(M,F)$, must result in an expected return of at least $\eta_M(\pi^*)-\epsilon\lambda$ on $(M,F)$, i.e. $\eta_{M,F}(\tilde{\pi} )\geq \eta_M(\pi^*)-\epsilon\lambda$. The expected return $\eta_{M,F}(\tilde{\pi} )$ decomposes into two parts: $(i)$ the expected return from original environment $M$, 
$\eta_M(\tilde{\pi})$, $(ii)$ the expected return from the fear model. If $\tilde{\pi}$ visits the fear zone with probability at most  $\tilde{\epsilon}$, then $\eta_{M,F}(\tilde{\pi})\geq \eta_M(\tilde{\pi}) - \lambda\tilde{\epsilon}$. 
Therefore, applying $\tilde{\pi}$ on $M$ promises an expected return of at least $\eta_M(\pi^*)-\epsilon\lambda +\tilde{\epsilon}\lambda$, lower bounded by $\eta_M(\pi^*)-\epsilon \lambda$.
\end{proof}

It is worth noting that the theorem holds for \emph{any} optimal policy of $M$.  If one of them does not visit the fear zone at all (i.e., $\epsilon = 0$), then $\eta_M(\pi^*)= \eta_{M,F}(\tilde{\pi}) $ and the fear signal can boost up the process of learning the optimal policy.

Since we empirically learn the fear model $F$ using collected data of some finite sample size $N$, 
our RL agent has access 
to an imperfect fear model $\hat{F}$, and therefore, computes the optimal policy based on $\hat{F}$. 
In this case, the RL agent trains with intrinsic fear generated by $\hat{F}$, 
learning a different value function 
than the RL agent with perfect $F$. 
To show the robustness against errors in $\hat{F}$, we are interested 
in the average deviation 
in the value functions 
of the two agents.

Our second main theoretical result, given in Theorem~\ref{thm:classifier}, allows the RL agent to use a smaller discount factor, denoted $\gamma_{plan}$, than the actual one ($\gamma_{plan}\leq \gamma$), to reduce the planning horizon and computation cost.
Moreover, when an estimated model of the environment is used, \namecite{jiang2015dependence} shows that using a smaller discount factor for planning may prevent over-fitting to the estimated model.  Our result demonstrates that using a smaller discount factor for planning can reduce reduction of expected return when an estimated fear model is used.


Specifically, for a given environment, 
with fear model $F_1$ and discount factor $\gamma_{1}$, 
let $V_{F_1,\gamma_1}^{\pi^*_{F_2,\gamma_{2}}}(s),~s\in\mathcal{S},$ 
denote the state value function 
under the optimal policy of an environment with fear model $F_2$ and the discount factor $\gamma_2$. 
In the same environment, 
let $\omega^{\pi}(s)$ denote the visitation distribution over states under policy $\pi$.
We are interested in the average reduction on expected return caused by an imperfect classifier; this reduction, denoted $\mathcal{L}(F,\widehat{F},\gamma,\gamma_{plan})$, is defined as
\begin{align*}
(1-\gamma)\int_{s\in\mathcal{S}}\omega^{\pi^*_{\widehat{F},\gamma_{plan}}}(s)
\left(V_{F,\gamma}^{\pi^*_{F,\gamma}}(s)-V_{F,\gamma}^{\pi^*_{\widehat{F},\gamma_{plan}}}(s)\right)ds\,.
\end{align*}
 
\begin{theorem}\label{thm:classifier}
Suppose $\gamma_{plan} \le \gamma$, and $\delta\in(0,1)$. Let $\hat{F}$ be the fear model in $\mathcal{F}$ with minimum empirical risk on $N$ samples.
For a given MDP model, the average reduction on expected return, $\mathcal{L}(F,\widehat{F},\gamma,\gamma_{plan})$, vanishes as $N$ increase:
with probability at least $1-\delta$,
\begin{align}\label{eq:classification}
\mathcal{L}=\mathcal{O}\left(\lambda\frac{1-\gamma}{1-\gamma_{plan}}\frac{\mathcal{VC}(\mathcal{F})+\log{\frac{1}{\delta}}}{N} + \frac{(\gamma-\gamma_{plan})}{1-\gamma_{plan}}\right)\,,
\end{align}
where $\mathcal{VC}(\mathcal{F})$ is the $\mathcal{VC}$ dimension of the hypothesis class $\mathcal{F}$.
\end{theorem}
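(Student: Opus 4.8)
The plan is to recognize $\mathcal{L}(F,\widehat F,\gamma,\gamma_{plan})$ as an instance of the loss incurred by planning in an \emph{approximate} model and with a \emph{shortened} effective horizon and then acting in the true model---the setting studied by \namecite{jiang2015dependence}. Here the ``true'' model is $M$ with fear model $F$ and discount $\gamma$, the ``planning'' model is $M$ with fear model $\widehat F$ and discount $\gamma_{plan}\le\gamma$, and the two share the same dynamics $\mathcal{T}$ and differ only (i) in the discount factor and (ii) through a bounded per-step reward perturbation: replacing $F$ by $\widehat F$ changes the reward $r_t-\lambda F(s_t)$ by $\lambda\big(\widehat F(s_t)-F(s_t)\big)$, of magnitude at most $\lambda$. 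So the work is to bound the two error sources---horizon truncation and fear-model error---separately, then control the fear-model error with a uniform generalization bound.

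First I would split the integrand $V_{F,\gamma}^{\pi^*_{F,\gamma}}(s)-V_{F,\gamma}^{\pi^*_{\widehat F,\gamma_{plan}}}(s)$, following the chaining in \namecite{jiang2015dependence}, into a \emph{horizon} part and a \emph{planning-error} part. The horizon part---the suboptimality, evaluated under the true discount $\gamma$, of a policy that is optimal under $\gamma_{plan}$---is bounded by the standard horizon-truncation estimate $O\!\big(\tfrac{\gamma-\gamma_{plan}}{(1-\gamma)(1-\gamma_{plan})}\big)$ times the reward range; after the $(1-\gamma)$ normalization in the definition of $\mathcal{L}$ this is the additive term $O\!\big(\tfrac{\gamma-\gamma_{plan}}{1-\gamma_{plan}}\big)$ in \eqref{eq:classification}. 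The planning-error part compares, within the fixed-discount-$\gamma_{plan}$ world, the value of $\widehat\pi:=\pi^*_{\widehat F,\gamma_{plan}}$ against that of $\pi^*_{F,\gamma_{plan}}$; here I would invoke a reward-perturbation (simulation) lemma---for any fixed $\pi$, the $(M,F,\gamma_{plan})$ and $(M,\widehat F,\gamma_{plan})$ value functions differ by at most $\tfrac{\lambda}{1-\gamma_{plan}}$ times the expected disagreement $\mathbb{E}\big[\mathbf{1}[\widehat F\ne F]\big]$ along $\pi$'s trajectories---and combine two instances of it with the optimality of $\widehat\pi$ in the planning model ($V_{\widehat F,\gamma_{plan}}^{\pi^*_{F,\gamma_{plan}}}\le V_{\widehat F,\gamma_{plan}}^{\widehat\pi}$) to collapse this part to $O\!\big(\tfrac{\lambda}{1-\gamma_{plan}}\big)$ times the classification error of $\widehat F$, measured under the relevant state-visitation distribution.

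It then remains to bound that classification error. Since $\widehat F$ is the empirical risk minimizer over $\mathcal{F}$ on $N$ i.i.d.\ labeled states and---as the $1/N$ rate requires---the ground-truth danger labeling (given by the detector $f$) is realizable within $\mathcal{F}$, the best hypothesis in $\mathcal{F}$ has zero risk, so the realizable-case VC bound gives, with probability at least $1-\delta$, a true error of $O\!\big(\tfrac{\mathcal{VC}(\mathcal{F})+\log(1/\delta)}{N}\big)$. Multiplying by the prefactor $\tfrac{\lambda(1-\gamma)}{1-\gamma_{plan}}$ carried over from the previous step and adding the horizon term gives exactly \eqref{eq:classification}.

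The step I expect to be the main obstacle is the last reconciliation of distributions: the generalization bound controls the error of $\widehat F$ under the distribution from which the $N$ samples were drawn, whereas $\mathcal{L}$---and the reward-perturbation step---require that error under the on-policy visitation distributions $\omega^{\widehat\pi}$ and $\omega^{\pi^*_{F,\gamma_{plan}}}$. Making this rigorous needs either an assumption that the sampling distribution dominates those visitation distributions with a bounded density ratio (a concentrability coefficient), or a version of the classifier bound made uniform over the policy-induced family of evaluation distributions via a covering argument whose cost is absorbed into $\mathcal{VC}(\mathcal{F})$ without changing its order. A lesser, bookkeeping point is that the per-step reward $r_t-\lambda F(s_t)$ actually spans an interval of width $1+\lambda$, so the horizon term strictly carries an extra factor $1+\lambda$; the stated bound suppresses it, effectively treating $\lambda$ as $O(1)$.
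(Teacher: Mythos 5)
Your proposal follows essentially the same route as the paper: the same two-part decomposition into a horizon-truncation term (bounded by the geometric-series estimate $\tfrac{\gamma-\gamma_{plan}}{(1-\gamma)(1-\gamma_{plan})}$) and a planning-error term collapsed, via optimality of $\pi^*_{\widehat F,\gamma_{plan}}$ in the perturbed model plus a reward-perturbation/simulation argument on the Bellman recursion, to $\tfrac{\lambda}{1-\gamma_{plan}}$ times the classification error, which is then controlled by the realizable-case VC bound at rate $\mathcal{O}\bigl((\mathcal{VC}(\mathcal{F})+\log(1/\delta))/N\bigr)$. The distribution-mismatch obstacle you flag (generalization under the sampling distribution versus the on-policy visitation distributions $\omega^{\pi}$) is real but is not resolved in the paper's proof either; the appendix only partially acknowledges it by adding an extra variation-distance term between successive visitation distributions.
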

\begin{proof}
In order to analyze $\left(V_{F,\gamma}^{\pi^*_{F,\gamma}}(s)-V_{F,\gamma}^{\pi^*_{\widehat{F},\gamma_{plan}}}(s)\right)$, which is always non-negative, we decompose it as follows:\hspace*{-0.1cm}
\begin{align}\label{eq:decompV}
\!\!\!\!\!\!\!\!\!\!\!\!\!\left(V_{F,\gamma}^{\pi^*_{F,\gamma}}(s)-V_{F,\gamma_{plan}}^{\pi^*_{F,\gamma}}(s)\right)+\left(V_{F,\gamma_{plan}}^{\pi^*_{F,\gamma}}(s)-V_{F,\gamma}^{\pi^*_{\widehat{F},\gamma_{plan}}}(s)\right)
\end{align}
The first term is the difference in the expected returns of $\pi^*_{F,\gamma}$ under two different discount factors, starting from $s$:
\begin{align}\label{eq:deltalambda}
\mathbb{E}\left[\sum_{t=0}^\infty (\gamma^t-\gamma^t_{plan})r_t|s_0 = s,\pi^*_{F,\gamma},F,M\right]\,.
\end{align}
Since $r_t\leq 1,~\forall t$, using the geometric series, Eq.~\ref{eq:deltalambda} is upper bounded by $\frac{1}{1-\gamma}-\frac{1}{1-\gamma_{plan}}=\frac{\gamma-\gamma_{plan}}{(1-\gamma_{plan})(1-\gamma)}$. 

The second term is upper bounded by $V_{F,\gamma_{plan}}^{\pi^*_{F,\gamma_{plan}}}(s)-V_{F,\gamma}^{\pi^*_{\widehat{F},\gamma_{plan}}}(s)$ since $\pi^*_{F,\gamma_{plan}}$ is an optimal policy of an environment equipped with $(F,\gamma_{plan})$. 
Furthermore, as $\gamma_{plan}\leq\gamma$ and $r_t\ge0$, we have
$V_{F,\gamma}^{\pi^*_{\widehat{F},\gamma_{plan}}}(s)\geq V_{F,\gamma_{plan}}^{\pi^*_{\widehat{F},\gamma_{plan}}}(s)$.
Therefore, the second term of Eq.~\ref{eq:decompV} is upper bounded by $V_{F,\gamma_{plan}}^{\pi^*_{F,\gamma_{plan}}}(s)-V_{F,\gamma_{plan}}^{\pi^*_{\widehat{F},\gamma_{plan}}}(s)$, which is the deviation of the value function under two different close policies. Since $F$ and $\widehat{F}$ are close, we expect that this deviation to be small. With one more decomposition step
\begin{align*}
&\!\!\!\!\! V_{F,\gamma_{plan}}^{\pi^*_{F,\gamma_{plan}}}(s)-V_{F,\gamma_{plan}}^{\pi^*_{\widehat{F},\gamma_{plan}}}(s)=
\left(V_{F,\gamma_{plan}}^{\pi^*_{F,\gamma_{plan}}}(s)-V_{\widehat{F},\gamma_{plan}}^{\pi^*_{F,\gamma_{plan}}}(s)\right)\nonumber\\ 
&\!\!\!\!\!\!\!\!\!\!\!\!\!\!\!+\left(V_{\widehat{F},\gamma_{plan}}^{\pi^*_{F,\gamma_{plan}}}(s)-V_{\widehat{F},\gamma_{plan}}^{\pi^*_{\widehat{F},\gamma_{plan}}}(s)\right)+
\left(V_{\widehat{F},\gamma_{plan}}^{\pi^*_{\widehat{F},\gamma_{plan}}}(s)-V_{F,\gamma_{plan}}^{\pi^*_{\widehat{F},\gamma_{plan}}}(s)\right)\,.
\end{align*}
Since the middle term in this equation is non-positive, we can ignore it for the purpose of upper-bounding the left-hand side. The upper bound is sum of the remaining two terms which is also upper bounded by 2 times of the maximum of them;
\begin{equation*}
2\max_{\pi\in\lbrace\pi^*_{F,\gamma_{plan}},\pi^*_{\widehat{F},\gamma_{plan}}\rbrace}\left|V_{\widehat{F},\gamma_{plan}}^{\pi}(s)-V_{F,\gamma_{plan}}^{\pi}(s)\right|\,,
\end{equation*}
which is the deviation in values of different domains. The value functions satisfy the Bellman equation for any $\pi$:
\begin{align}\label{eq:dynamic}
V_{{F},\gamma_{plan}}^{\pi}(s) = &\mathcal{R}(s,\pi(s))+\lambda F(s)\nonumber \\
+ \gamma_{plan}& \int_{s'\in\mathcal{S}}\!\!\!\!\!\!\!\!\mathcal{T}(s'|s,\pi(s))V_{{F},\gamma_{plan}}^{\pi}(s')ds\nonumber\\
V_{\widehat{F},\gamma_{plan}}^{\pi}(s) = &\mathcal{R}(s,\pi(s))+\lambda\widehat{F}(s) \\
+ \gamma_{plan} &\int_{s'\in\mathcal{S}}\!\!\!\!\!\!\!\!\mathcal{T}(s'|s,\pi(s))V_{\widehat{F},\gamma_{plan}}^{\pi}(s')ds
\end{align}
which can be solved using iterative updates of dynamic programing.
Let $V^\pi_i(s)$ and  $\widehat{V}^\pi_i(s) $ respectably denote the $i$'th iteration of the dynamic programmings corresponding to the first and second equalities in Eq.~\ref{eq:dynamic}. Therefore, for any state
\begin{align}
V^\pi_i(s) - &\widehat{V}^\pi_i(s) = 
\lambda'{F}(s) - \lambda'\widehat{F}(s) \nonumber\\
+& \gamma_{plan} \int_{s'\in\mathcal{S}}\!\!\!\!\!\!\!\!\mathcal{T}(s'|s,\pi(s))\left({V}_{i-1}(s')-\widehat{V}_{i-1}(s')\right)ds\nonumber\\
\leq &\lambda \sum_{i'=0}^{i}\left(\gamma_{plan}\mathcal{T}^{\pi}\right)^{i'}\left({F}-\widehat{{F}}\right)(s)\,,
\end{align}
where $(\mathcal{T}^{\pi})^{i}$ is a kernel and denotes the transition operator applied $i$ times to itself. The classification error $\left|{F}(s) - \widehat{F}(s)\right|$ is the zero-one loss of binary classifier, therefore, its expectation $\int_{s\in\mathcal{S}}\omega^{\pi^*_{\widehat{F},\gamma_{plan}}}(s)\left|{F}(s) - \widehat{F}(s)\right|ds$ is bounded by $3200\frac{\mathcal{VC}(\mathcal{F})+\log{\frac{1}{\delta}}}{N}$ with probability at least $1-\delta$~\cite{vapnik2013nature,hanneke2016optimal}. As long as the operator $(\mathcal{T}^{\pi})^{i}$ is a linear operator, 
\begin{align}\label{eq:vapnik}
\int_{s\in\mathcal{S}}\!\!\!\!\!\!\!\!\omega^{\pi^*_{\widehat{F},\gamma_{plan}}}(s)\left|V^\pi_i(s) - \widehat{V}^\pi_i(s)\right|ds\leq \lambda\frac{3200}{1-\gamma_{plan}}\frac{\mathcal{VC}(\mathcal{F})+\log{\frac{1}{\delta}}}{N}\,.
\end{align}
Therefore, $\mathcal{L}(F,\widehat{F},\gamma,\gamma_{plan})$ is bounded by $(1-\gamma)$ times of sum of Eq.~\ref{eq:vapnik} and $\frac{1-\gamma}{1-\gamma_{plan}}$, with probability at least $1-\delta$.
\end{proof}

Theorem~\ref{thm:classifier} holds for both finite and continuous state-action MDPs.
%
%
Over the course of our experiments, 
we discovered the following pattern: 
Intrinsic fear models are more effective 
when the \emph{fear radius} $k_r$
is large enough that the model 
can experience danger states at a safe distance 
and correct the policy, without experiencing many catastrophes. 
When the fear radius is too small, 
the danger probability is only nonzero 
at states from which catastrophes 
are inevitable anyway and 
intrinsic fear seems not to help. 
We also found that wider fear factors train more stably when 
phased in over the course of many episodes.
So, in all of our experiments 
we gradually phase in the \emph{fear factor} 
from $0$ to $\lambda$
reaching full strength at predetermined time step $k_{\lambda}$. 

\section{Environments}
\label{sec:failures}
We demonstrate our algorithms on the following environments: 
(i) \emph{Adventure Seeker}, a toy pathological environment  that we designed to demonstrate catastrophic forgetting; (ii) \emph{Cartpole}, a classic RL environment;
and (ii) the Atari games \emph{Seaquest}, \emph{Asteroids}, and \emph{Freeway}~\cite{bellemare2013arcade}.

\paragraph{Adventure Seeker}
We imagine a player placed on a hill, 
sloping upward to the right (Figure \ref{fig:adventure-seeker}).
At each turn, the player can move 
to the right (up the hill) or left (down the hill).
The environment adjusts the player's position accordingly, 
adding some random noise.
Between the left and right edges of the hill, 
the player gets more reward 
for spending time higher on the hill.
But if the player goes too far to the right, 
she will fall off, 
terminating the episode (catastrophe).
Formally, the state is single continuous variable 
$s \in [0,1.0]$, 
denoting the player's position.
The starting position for each episode 
is chosen uniformly at random in the interval $[.25, .75]$. 
The available actions consist only 
of $\{-1, +1\}$ (\emph{left} and \emph{right}).
Given an action $a_t$ in state $s_t$, $\mathcal{T}(s_{t+1}|s_t, a_t)$ 
the successor state is produced
$s_{t+1} \gets s_{t} + .01 \cdot a_t + \eta$
where $\eta \sim \mathcal{N}(0,.01^2)$.
The reward at each turn is $s_t$ 
(proportional to height).
The player falls off the hill, 
entering the catastrophic terminating state, 
whenever $s_{t+1} > 1.0$ or $s_{t+1} < 0.0$.

\begin{figure}[t]
\centering
\subfigure[Adventure Seeker]{
\includegraphics[width=.45\textwidth]{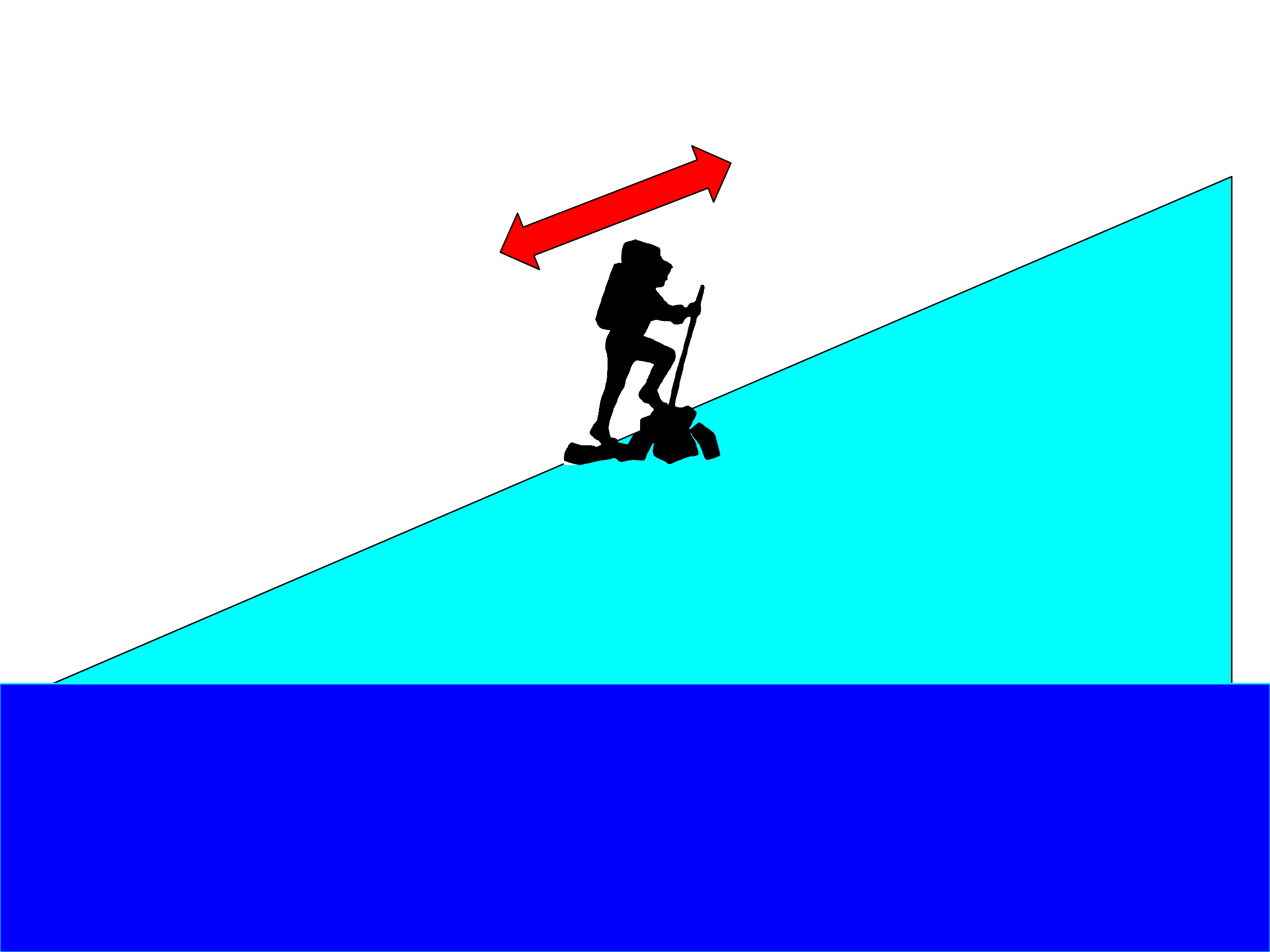}
\label{fig:adventure-seeker}
}
\subfigure[Cart-Pole]{
\includegraphics[width=.45\textwidth]{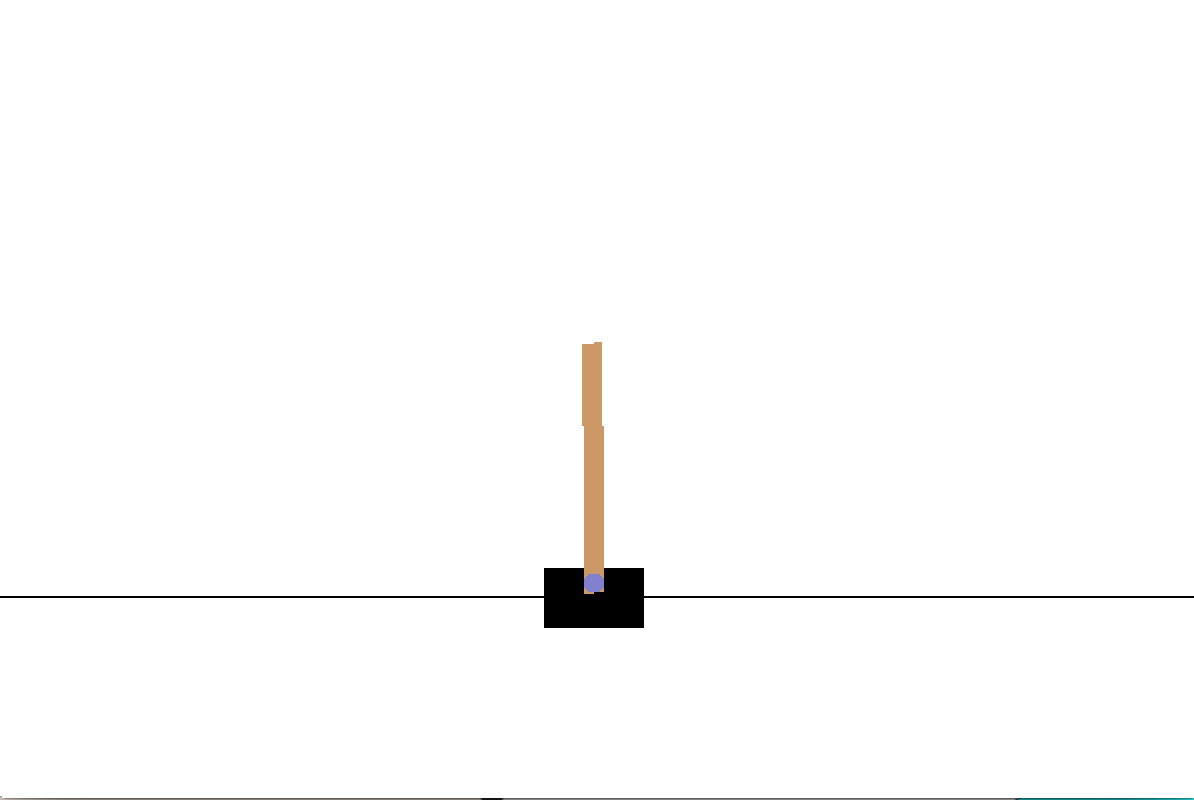}
\label{fig:cart-pole}
}

\subfigure[Seaquest]{
\includegraphics[width=.3\textwidth]{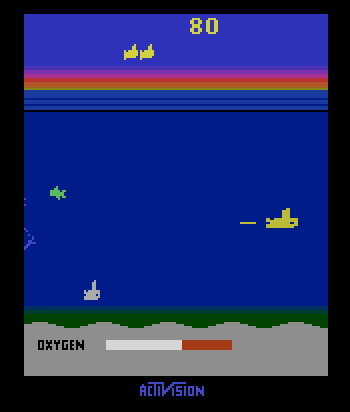}
\label{fig:seaquest}
}
\subfigure[Asteroids]{
\includegraphics[width=.3\textwidth]{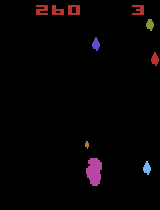}
\label{fig:asteroids}
}
\subfigure[Freeway]{
\includegraphics[width=.3\textwidth]{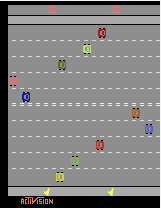}
\label{fig:freeway}
}

\caption{In experiments, we consider two toy environments (a,b) and the Atari games Seaquest (c), Asteroids (d), and Freeway (e)}
\label{fig:domains}
\end{figure}


This game should be easy to solve.
There exists a threshold above which 
the agent should always choose to go left 
and below which it should always go right. 
And yet a DQN agent
will periodically die. 
Initially, the DQN quickly learns a good policy and avoids the catastrophe,
but over the course of continued training,
the agent, owing to the shape of the reward function,
collapses to a policy which always moves right, 
regardless of the state.
We might critically ask in what real-world scenario, 
we could depend upon a system 
that cannot solve \emph{Adventure Seeker}.


\paragraph{Cart-Pole}
In this classic RL environment,
an agent balances a pole atop a cart (Figure \ref{fig:cart-pole}).
Qualitatively, the game exhibits four distinct catastrophe modes.
The pole could fall down to the right or fall down to the left.
Additionally, the cart could run off the right boundary of the screen or run off the left. 
Formally, at each time, the agent observes a four-dimensional  state vector $(x,v,\theta, \omega)$ consisting respectively of the cart position, cart velocity, pole angle, and the pole's angular velocity.
At each time step, the agent chooses an action, 
applying a force of either $-1$ or $+1$.
For every time step that the pole remains upright and the cart remains on the screen, the agent receives a reward of $1$. 
If the pole falls, the episode terminates, giving a return of $0$ from the penultimate state. 
In experiments, we use the implementation \emph{CartPole-v0}
contained in the openAI gym~\cite{brockman2016gym}.
Like Adventure Seeker, this problem admits an analytic solution. A perfect policy should never drop the pole. But, as with Adventure Seeker, a DQN converges to a constant rate of  catastrophes per turn. 

\paragraph{Atari games} 
In addition to these pathological cases, 
we address Freeway, Asteroids, and Seaquest, 
games from the Atari Learning Environment.
In Freeway, the agent controls a chicken
with a goal of crossing the road 
while dodging traffic. 
The chicken loses a life and starts from the original location 
if hit by a car. 
Points are only rewarded for successfully crossing the road.
In Asteroids, the agent pilots a ship 
and gains points from shooting the asteroids.
She must avoid colliding with asteroids which cost it lives.
In Seaquest, a player swims under water. 
Periodically, as the oxygen gets low, 
she must rise to the surface for oxygen.
Additionally, fishes swim across the screen. 
The player gains points each time she shoots a fish. 
Colliding with a fish or running out of oxygen result in death. 
In all three games, the agent has 3 lives, 
and the final death is a terminal state. 
We label each loss of a life as a catastrophe state.

\section{Experiments}
\label{sec:experiments}
First, on the toy examples,
We evaluate standard DQNs
and \emph{intrinsic fear} DQNs
using multilayer perceptrons (MLPs) 
with a single hidden layer and $128$ hidden nodes. 
We train all MLPs by stochastic gradient descent 
using the Adam optimizer \cite{kingma2014adam}. 

In \emph{Adventure Seeker},
an agent can escape from danger 
with only a few time steps of notice, 
so we set the fear radius $k_r$ to $5$.  
We phase in the fear factor quickly, 
reaching full strength in just $1000$ steps. 
On this problem we set the fear factor 
$\lambda$ to $40$.
For \emph{Cart-Pole}, we set a 
wider fear radius of $k_r=20$.
We initially tried training this model
with a short fear radius but made the following observation:
One some runs, IF-DQN would surviving for millions of experiences, while on other runs, it might experience many catastrophes. 
Manually examining fear model output 
on successful vs unsuccessful runs,
we noticed that on the bad runs,
the fear model outputs non-zero probability of danger for precisely the $5$ moves before a catastrophe. In Cart-Pole, by that time,
it is too to correct course. 
On the more successful runs, 
the fear model often outputs predictions in the range $.1-.5$. 
We suspect that the gradation between mildly dangerous states and those with certain danger provides a richer reward signal to the DQN. 

On both the Adventure Seeker and Cart-Pole environments, DQNs augmented by intrinsic fear far outperform their otherwise identical counterparts. 
We also compared IF to some traditional approaches for mitigating catastrophic forgetting.
For example, we tried a memory-based method
in which we preferentially sample the catastrophic states for updating the model,
but they did not improve over the DQN.
It seems that the notion of a danger zone is necessary here. 

For Seaquest, Asteroids, and Freeway, we use a fear radius of $5$ 
and a fear factor of $.5$. 
For all Atari games, 
the IF models outperform their DQN counterparts. 
Interestingly while for all games, the IF models achieve higher reward, 
on Seaquest, IF-DQNs  
have similar catastrophe rates (Figure \ref{fig:fear-results}).
Perhaps the IF-DQN enters a region of policy space with a strong incentives to exchange catastrophes for higher reward.
This result suggests an interplay between the various reward signals that warrants further exploration. 
For Asteroids and Freeway, the improvements are more dramatic. Over just a few thousand episodes of Freeway, a randomly exploring DQN achieves zero reward. However, the reward shaping 
of intrinsic fear leads to rapid improvement.

\begin{figure}[t]
\centering
\subfigure[Seaquest]{
\includegraphics[width=.30\linewidth,height=2.cm]{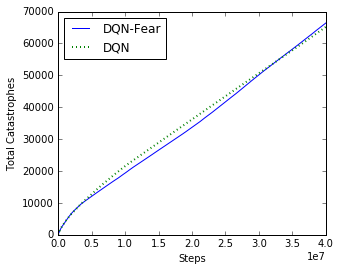}
\label{fig:seaquest-catastrophe-lives}
}
\subfigure[Asteroids]{
\includegraphics[width=.30\linewidth,height=2.cm]{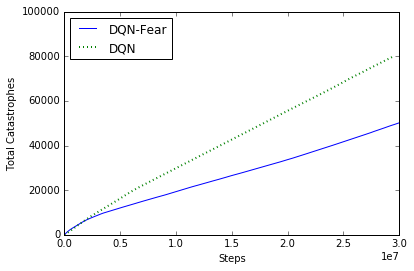}
\label{fig:asteroids-catastrophe-lives}
}
\subfigure[Freeway]{
\includegraphics[width=.30\linewidth,height=2.cm]{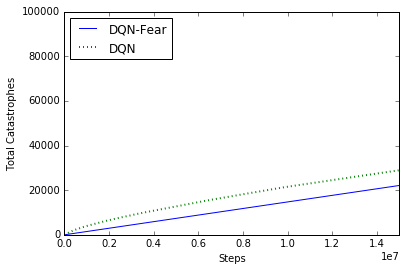}
\label{fig:freeway-catastrophe-lives}
}
\subfigure[Seaquest]{
\includegraphics[width=.30\linewidth,height=2.cm]{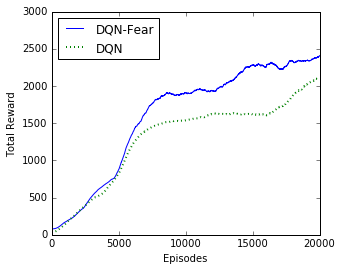}
\label{fig:seaquest-reward}
}
\subfigure[Asteroids]{
\includegraphics[width=.30\linewidth,height=2.cm]{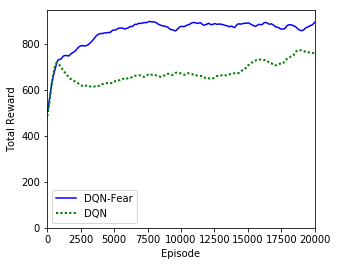}
\label{fig:asteroids-reward}
}
\subfigure[Freeway]{
\includegraphics[width=.30\linewidth,height=2.cm]{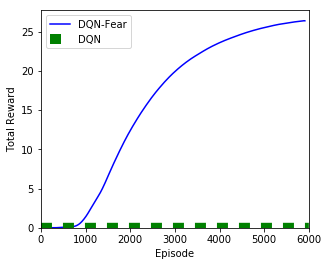}
\label{fig:freeway-reward}
}
\caption{Catastrophes (first row) and reward/episode (second row) for DQNs and \emph{Intrinsic Fear}. 
On Adventure Seeker, all Intrinsic Fear models 
cease to ``die'' within $14$ runs, 
giving unbounded (unplottable) reward thereafter. 
On Seaquest, the IF model achieves a similar catastrophe rate 
but significantly higher total reward. 
On Asteroids, the IF model outperforms DQN. 
For Freeway, a randomly exploring DQN (under our time limit) never gets reward but IF model learns successfully.}
\label{fig:fear-results}
\end{figure}

\section{Related work}
\label{sec:related}
The paper studies safety in RL, intrinsically motivated RL, and the stability of Q-learning with function approximation under distributional shift. Our work also has some connection to reward shaping. We attempt to highlight the most relevant papers here.  Several papers address safety in RL.
\namecite{garcia2015comprehensive} provide a thorough review on the topic, 
identifying two main classes of methods:  those that perturb the objective function 
and those that use external knowledge 
to improve the safety of exploration.

While a typical reinforcement learner 
optimizes expected return, 
some papers suggest 
that a safely acting agent 
should also minimize risk. 
\namecite{hans2008safe} defines a \emph{fatality} 
as any return below some threshold $\tau$. 
They propose a solution comprised of 
a \emph{safety function}, 
which identifies unsafe states,
and a \emph{backup model}, 
which navigates away from those states.
Their work, which only addresses the tabular setting,
suggests that an agent 
should minimize the probability of fatality 
instead of maximizing the expected return. 
\namecite{heger1994consideration} 
suggests an alternative 
Q-learning objective concerned with the minimum (vs. expected) return. 
Other papers suggest modifying the objective to penalize policies with high-variance returns \cite{garcia2015comprehensive,chow15cvar}. 
Maximizing expected returns 
while minimizing their variance 
is a classic problem in finance,
where a common objective is the ratio of expected return to its standard deviation \cite{sharpe1966mutual}. Moreover, \namecite{azizzadenesheli2018efficient} suggests to learn the variance over the returns in order to make safe decisions at each decision step.
\namecite{moldovan2012safe} give a definition 
of safety based on ergodicity. 
They consider a fatality to be a state 
from which one cannot return to the start state.
\namecite{shalev2016safe} theoretically analyzes how strong a penalty should be
to discourage accidents.
They also consider hard constraints to ensure safety.
None of the above works address the case 
where distributional shift 
dooms an agent to perpetually revisit known catastrophic failure modes.
Other papers incorporate external knowledge into the exploration process.
Typically, this requires access to an oracle 
or extensive prior knowledge of the environment.
In the extreme case, some papers suggest 
confining the policy search to a known subset of \emph{safe} policies.
For reasonably complex environments or classes of policies, this seems infeasible.

The potential oscillatory or divergent behavior of Q-learners with function approximation has been previously identified \cite{boyan1995generalization,baird1995residual,gordon1996chattering}. 
Outside of RL, the problem of covariate shift has been extensively studied 
\cite{sugiyama2012machine}.
\namecite{murata2005memory} addresses the problem of catastrophic forgetting owing to distributional shift in RL 
with function approximation, 
proposing a memory-based solution.
Many papers address intrinsic rewards,
which are internally assigned, 
vs the standard (extrinsic) reward. 
Typically, intrinsic rewards 
are used to encourage exploration 
\cite{jurgen1991possibility,bellemare2016unifying} 
and to acquire a modular set of skills \cite{chentanez2004intrinsically}.
Some papers refer to the intrinsic reward for discovery as \emph{curiosity}.
Like classic work on intrinsic motivation,
our methods perturb the reward function.
But instead of assigning bonuses to encourage discovery of novel transitions, 
we assign penalties to discourage catastrophic transitions.
\paragraph{Key differences} 
In this paper, we undertake a novel treatment 
of safe reinforcement learning,
While the literature offers several notions of safety in reinforcement learning, 
we see the following problem: 
Existing safety research that perturbs 
the reward function requires little foreknowledge,
but fundamentally changes the objective globally. 
On the other hand, processes relying on expert knowledge 
may presume an unreasonable level of foreknowledge.
%
Moreover, little of the prior work on safe reinforcement learning, to the best of our knowledge, specifically addresses the problem of catastrophic forgetting. 
This paper proposes a new class of algorithms for avoiding catastrophic states and a theoretical analysis supporting its robustness. 

\section{Conclusions}
\label{sec:discussions}
Our experiments demonstrate that DQNs
are susceptible to periodically repeating mistakes, however bad, raising questions about their real-world utility
when harm can come of actions. 
While it is easy to visualize these problems on toy examples, 
similar dynamics are embedded in more complex domains.
Consider a domestic robot acting as a barber. 
The robot might receive positive feedback for giving a closer shave. 
This reward encourages closer contact at a steeper angle.
Of course, the shape of this reward function belies the catastrophe lurking just past the optimal shave. 
Similar dynamics might be imagines 
in a vehicle that is rewarded 
for traveling faster 
but could risk an accident 
with excessive speed. 
Our results with the intrinsic fear 
model suggest 
that with only a small amount of prior knowledge (the ability to recognize catastrophe states after the fact), 
we can simultaneously accelerate learning and avoid catastrophic states. 
This work is a step towards combating  
DRL's tendency to revisit catastrophic states due to catastrophic forgetting.

 


\bibliographystyle{named}
\bibliography{intrinsic-fear}

\newpage
\onecolumn

\appendix
\section*{An extension to the Theorem \ref{thm:classifier}}
In practice, we gradually learn and improve $\widehat{F}$ where the difference between learned $\widehat{F}$ after two consecrative updates, $\widehat{F}_{t}$ and $\widehat{F}_{t+1}$, consequently, $\omega^{\pi^*_{\widehat{F}_t,\gamma_{plan}}}$ and $\omega^{\pi^*_{\widehat{F}_{t+1},\gamma_{plan}}}$ decrease. While $\widehat{F}_{t+1}$ is learned through using the samples drawn from $\omega^{\pi^*_{\widehat{F}_t,\gamma_{plan}}}$, with high probability
\begin{align*}
\int_{s\in\mathcal{S}}\omega^{\pi^*_{\widehat{F}_t,\gamma_{plan}}}(s)\left|{F}(s) - \widehat{F}_{t+1}(s)\right|ds\leq
3200\frac{\mathcal{VC}(\mathcal{F})+\log{\frac{1}{\delta}}}{N}
\end{align*}

But in the final bound in Theorem \ref{thm:classifier}, we interested in 
$\int_{s\in\mathcal{S}}\omega^{\pi^*_{\widehat{F}_{t+1},\gamma_{plan}}}(s)\left|{F}(s) - \widehat{F}_{t+1}(s)\right|ds$. Via decomposing in into two terms 
\begin{align*}
\int_{s\in\mathcal{S}}\omega^{\pi^*_{\widehat{F}_t,\gamma_{plan}}}(s)\left|{F}(s) - \widehat{F}_{t+1}(s)\right|ds+
\int_{s\in\mathcal{S}}|\omega^{\pi^*_{\widehat{F}_{t+1},\gamma_{plan}}}(s)-\omega^{\pi^*_{\widehat{F}_t,\gamma_{plan}}}(s)|ds
\end{align*}
Therefore, an extra term of
$\lambda\frac{1}{1-\gamma_{plan}}\int_{s\in\mathcal{S}}|\omega^{\pi^*_{\widehat{F}_{t+1},\gamma_{plan}}}(s)-\omega^{\pi^*_{\widehat{F}_t,\gamma_{plan}}}(s)|ds$ appears in the final bound of Theorem \ref{thm:classifier}. 

Regarding the choice of $\gamma_{plan}$, if $\lambda\frac{\mathcal{VC}(\mathcal{F})+\log{\frac{1}{\delta}}}{N}$ is less than one, then the best choice of $\gamma_{plan}$ is $\gamma$. Other wise, if $\frac{\mathcal{VC}(\mathcal{F})+\log{\frac{1}{\delta}}}{N}$ is equal to exact error in the model estimation, and is greater than $1$, then the best $\gamma_{plan}$ is 0. Since, $\frac{\mathcal{VC}(\mathcal{F})+\log{\frac{1}{\delta}}}{N}$ is an upper bound, not an exact error, on the model estimation, the choice of zero for $\gamma_{plan}$ is not recommended, and a choice of $\gamma_{plan}\leq \gamma$ is preferred.


\end{document}